\newcommand{\KL}{\mathrm{KL}}
\newcommand{\E}{\mathbb E}
\DeclareMathOperator{\poly}{poly}
\newcommand{\BR}{\mathfrak{BR}}
\newcommand{\mix}{\text{mix}}
\newcommand{\R}{\mathfrak{R}}
\title{Information Directed Sampling for Sparse Linear Bandits}
\author{%
  Botao Hao\\
  Deepmind\\
  \texttt{haobotao000@gmail.com} \\
   \And
   Tor Lattimore \\
   Deepmind \\
   \texttt{lattimore@google.com} \\
   \AND
  Wei Deng\\
  Department of Mathematics\\
   Purdue University\\
   \texttt{deng106@purdue.edu} \\
}
\begin{document}

\maketitle

\begin{abstract}
Stochastic sparse linear bandits offer a practical model for high-dimensional online decision-making problems and have a rich information-regret structure. In this work we explore the use of information-directed sampling (IDS), which naturally balances the information-regret trade-off. We develop a class of information-theoretic Bayesian regret bounds that nearly match existing lower bounds on a variety of problem instances, demonstrating the adaptivity of IDS. To efficiently implement sparse IDS, we propose an empirical Bayesian approach for sparse posterior sampling using a spike-and-slab Gaussian-Laplace prior.  Numerical results demonstrate
significant regret reductions by sparse IDS
relative to several baselines.

\end{abstract}

\section{Introduction}
Standard linear bandits associate each action with a feature vector and assume the mean reward is
the inner product between the feature vector and an unknown parameter vector
\citep{auer2002using, dani2008stochastic, rusmevichientong2010linearly, chu2011contextual, abbasi2011improved}. Sparse linear bandits generalize linear bandits by assuming the unknown parameter vector is sparse \citep{abbasi2012online, carpentier2012bandit, hao2020high} and is of great practical significance for modeling high-dimensional online decision-making problems \citep{bastani2020online}.

\citet[\S 24.3]{lattimore2018bandit} established a $\Omega(\sqrt{sdn})$ regret lower bound for the \emph{data-rich regime}, where $n$ is the horizon, $d$ is the feature dimension, $s$ is the sparsity and data-rich regime refers to the horizon $n\geq d^{\alpha}$ for some $\alpha>0$. This means polynomial dependence on $d$ is generally not avoidable without additional assumptions. However, this bound hides much of the rich structure of sparse linear bandits by a crude maximisation over all environments. 

When the action set admits a well-conditioned exploration distribution, \cite{hao2020high} discovered the information-regret trade-off phenomenon by establishing an $\Theta(\poly(s)n^{2/3})$ minimax rate for the \emph{data-poor regime}. An interpretation for this optimal rate is that the agent needs to acquire enough information for fast sparse learning by pulling informative actions that even have high regret. Explore-then-commit algorithm can achieve this rate for the data-poor regime but is sub-optimal in the data-rich regime. 
Therefore, our goal is to develop an efficient algorithm that can adapt to different information-regret structures for sparse linear bandits.

\paragraph{Contributions} Our contribution is three-fold:
\begin{itemize}
    \item We prove that optimism-based algorithms fail to optimally address the information-regret trade-off in sparse linear bandits, which results in a sub-optimal regret bound.
    \item We provide the first analysis using information theory for sparse linear bandits and derive a class of nearly optimal Bayesian regret bounds for IDS that can adapt to information-regret structures. 
    \item To approximate the information ratio, we develop an empirical Bayesian approach for sparse posterior sampling using spike-and-slab Gaussian-Laplace prior. Through several experiments, we justify the great empirical performance of sparse IDS with an efficient implementation.
\end{itemize}

\section{Preliminary}

We first introduce the basic setup of stochastic sparse linear bandits. The agent receives a compact action set $\cA\subseteq \mathbb R^d$ in the beginning where $|\cA|=K$. At each round $t$, the agent chooses an action $A_t\in\cA$ and receives a reward $Y_t = \langle A_t, \theta^*\rangle + \eta_t,$
where $(\eta_t)_{t=1}^n$ is a sequence of independent standard
Gaussian random variables and $\theta^*\in\mathbb R^d$ is the true parameter unknown to the agent. We make the mild boundedness assumption that for all $a \in \cA$, $\|a\|_\infty \leq 1$. The notion of sparsity can be defined through the parameter space $\Theta$:
\begin{equation*}
    \Theta = \left\{\theta\in\mathbb R^d \Bigg| \sum_{j=1}^d\ind\{\theta_j\neq 0\}\leq s, \|\theta\|_2\leq 1\right\}\,.
\end{equation*}
We assume $s$ is known and it can be relaxed by putting a prior on it. We consider the Bayesian setting where $\theta^*$ is a random variable taking values in $\Theta$ and denote $\rho$ as the prior distribution.  The optimal action is $x^* = \argmax_{a\in\cA}\mathbb E[\langle a, \theta^*\rangle|\theta^*]$. 
The agent chooses $A_t$ based on the history $\cF_t= (A_1, Y_1, \ldots, A_{t-1}, Y_{t-1})$. Let $\cD(\cA)$ be the space of probability measures over $\cA$. A policy $\pi = (\pi_t)_{t\in\mathbb N}$ is a sequence of deterministic functions where $\pi_t(\cF_t)$ specifies a probability distribution over $\cA$. 
The information-theoretic Bayesian regret of a policy $\pi$ \citep{russo2014learning}  is defined as
\begin{equation*}
   \BR(n; \pi)= \mathbb E\left[\sum_{t=1}^n \langle x^*, \theta^*\rangle - \sum_{t=1}^n Y_t\right] \,, 
\end{equation*}
where the expectation is over the interaction sequence induced by the agent and environment and the prior distribution over $\theta^*$.

\paragraph{Notation}  Denote $I_d$ as the $d\times d$ identity matrix. Let $[n] = \{1,2, \ldots, n\}$. For a  vector $x$ and positive semidefinite matrix $A$, we let $\|x\|_A=\sqrt{x^{\top}Ax}$ be 
the weighted $\ell_2$-norm and $\sigma_{\min}(A)$ be the minimum eigenvalue of $A$. The relation $ x \gtrsim y$ means that $x$ is greater or equal to $y$ up to some universial constant and $\tilde{O}(\cdot)$ hides modest logarithmic factors and universal constant.
The cardinality of a set $\cA$ is denoted by $|\cA|$. Given a measure $\mathbb P$ and jointly distributed random variables $X$ and $Y$ we let $\mathbb P_X$ denote the law of $X$ and we let $\mathbb P_{X|Y}$ be the conditional law of $X$ given $Y$: $\mathbb P_{X|Y}(\cdot) = \mathbb P(X\in \cdot|Y)$. The mutual information between $X$ and $Y$ is $I(X;Y) = \mathbb E[D_{\KL}(\mathbb P_{X|Y}||\mathbb P_X)]$ where $D_{\KL}$ is the relative entropy. 
We write $\mathbb P_t(\cdot) = \mathbb P(\cdot|\cF_t)$ as the posterior measure where $\mathbb P$ is the probability measure over $\theta$ and the history and $\mathbb E_t(\cdot) = \mathbb E(\cdot|\cF_t)$. Denote $I_t(X;Y) = \mathbb E_t[D_{\KL}(\mathbb P_{t, X|Y}||\mathbb P_{t,X})]$.

\section{Related work}

\begin{table}\label{table:comparsion}
\centering
\caption{Comparisons with existing results. APS11, LCS15, HLW20 refer to \cite{abbasi2012online, lattimore2015linear, hao2020high} accordingly. Exploratory action set is defined in Definition \ref{def:explortory} and $K$ is the number of actions. The last lower bound is developed in \cite{hao2020high}.}

\scalebox{0.94}{
\begin{tabular}{ |l|c|c|c|c| } 
 \hline & Action set& Algorithm & Type & Rate\\ 
 \hline
 APS11 & arbitrary & online-to-confidence &  freq & $O(\sqrt{sdn})$\\
  \hline
 LCS15
  &hypercube & elimination & freq &$O(s\sqrt{n})$\\
 \hline
 HLW20
  &exploratory & explore-then-commit & freq &$O(s^{2/3}n^{2/3})$\\
 \hline
  This paper
  &arbitrary & sparse IDS & Bayesian &$O(\min(\sqrt{sdn},\sqrt{dn\log(K)}))$\\
 \hline
   This paper
  &arbitrary & sparse TS & Bayesian &$O(\min(\sqrt{sdn},\sqrt{dn\log(K)}))$\\
 \hline
  This paper
  &exploratory & sparse IDS & Bayesian &$O(\min(sn^{2/3},\sqrt{sdn})$\\
 \hline
   Lower bound
  &arbitrary & NA & minimax &$\Omega(\sqrt{sdn})$\\
 \hline
    Lower bound
  &exploratory & NA & minimax &$\Omega(\min(s^{1/3}n^{2/3},\sqrt{dn}))$\\
 \hline
\end{tabular}}
\end{table}
\paragraph{Sparse linear bandits} \cite{abbasi2012online} proposed an inefficient
online-to-confidence-set conversion approach that achieves an $\tilde{O}(\sqrt{sdn})$ upper bound for an arbitrary action set. \cite{lattimore2015linear}
developed a selective explore-then-commit algorithm that only works when the action set is exactly the binary hypercube and derived an optimal $O(s\sqrt{n})$ upper bound. \cite{hao2020high} introduced the notion of an exploratory action set and proved a $\Theta(\poly(s)n^{2/3})$ minimax rate for the data-poor regime using an explore-then-commit algorithm. \cite{hao2021online} extended this concept to a MDP setting. \cite{carpentier2012bandit} considered a special case where the action set is the unit sphere and the noise 
is vector-valued so that the noise becomes smaller as the dimension grows. We summarize the comparison of existing results with our work in Table \ref{table:comparsion}.

\paragraph{Sparse linear contextual bandits} It recently
became popular to study the contextual setting, where the action set changes from round to round. These results can not be reduced to our setting since they rely on either careful assumptions on the context distribution \citep{bastani2020online, wang2018minimax, kim2019doubly, wang2020nearly, ren2020dynamic, oh2020sparsity} such that classical high-dimensional statistics can be used, or have polynomial dependency on the number of actions \citep{agarwal2014taming, foster2020beyond, simchi2020bypassing}.


\paragraph{Information-directed sampling} \cite{russo2018learning} introduced IDS and derived Bayesian regret bounds for multi-armed bandits, linear bandits and combinatorial bandits. \cite{liu2018information} studied IDS for bandits with graph-feedback. \cite{kirschner2018information, kirschner2020information} investigated the use of IDS for bandits with heteroscedastic noise and partial monitoring. \cite{kirschner2020asymptotically} proved the asymptotic optimality of frequentist IDS for linear bandits.

\paragraph{Information-theoretic analysis} \cite{russo2014learning} introduced an information-theoretic analysis of Thompson sampling (TS) and \cite{bubeck2020first} strengthened the result with a first-order Bayesian regret analysis. \cite{dong2018information, dong2019performance} extended the analysis to infinite-many actions and logistic bandits. \cite{lattimore2019information} explored the use of information-theoretic analysis for partial monitoring. \cite{lu2019information} generalized the analysis to reinforcement learning.

\paragraph{Bayesian sparse linear regression.}  In the Bayesian framework, spike-and-slab methods are commonly used as probabilistic tools for sparse linear regression but most of prior works focus on variable selection and parameter estimation rather than uncertainty quantification \citep{Mitchell1988, george1993variable, rovckova2018spike}. \cite{bai2020spike} provided a comprehensive overview.

\section{Does the optimism optimally balance information and regret?}\label{sec:tradeoff}
We demonstrate the necessity of balancing the trade-off between information and regret through a simple sparse linear bandit instance. We give an example where the optimal regret is only possible by playing actions that are known to be sub-optimal. This phenomenon has disturbing
implications for policies based on the principle
of optimism, which
is that they can never be minimax
optimal in certain regime.

\paragraph{Illustrative example} Consider a problem instance where $\cA=\cI\cup\cU$ is the union of an informative action set $\cI$ and an uninformative action set $\cU$:
\begin{itemize}
    \item $\cI$ is a \emph{subset} of  the hypercube that has three properties. First, $|\cI|=O(s\log(ed/s))$. Second, the last coordinate of actions in $\cI$ is always -1. Third, the empirical covariance of the uniform distribution over $\cI$ has a restricted minimum eigenvalue (Definition \ref{def:re}) at least 1/4. We prove such $\cI$ does exist in Appendix \ref{sec:proof_claim} through a probabilistic argument.
   \item $\cU=\{x\in\mathbb R^d|x_j\in\{-1, 0, 1\} \ \text{for} \ j\in[d-1], \|x\|_1=s-1, x_d = 0\}.$
\end{itemize}
The true parameter $\theta^* = (\varepsilon, \ldots, \varepsilon, 0, \ldots, 0, -1),$
where $\varepsilon>0$ is a small constant. 

\paragraph{Information-regret structure} Sampling an action uniformly at random from $\cI$ ensures the covariance matrix is well-conditioned so that sparse learning such as Lasso \citep{tibshirani1996} can be used for learning $\theta^*$ faster than ordinary least squares. This means pulling actions from $\cI$ provides more information to infer $\theta^*$ than from $\cU$. On the other hand, actions from $\cI$ lead to high regret due to the last coordinate -1. As a consequence, \cite{hao2020high} has proven that the minimax regret for this problem is $\Theta(\poly(s)n^{2/3})$ when the horizon is smaller than the ambient dimension. 

\paragraph{Sub-optimality of optimism-based algorithms}
We argue the optimism principle does not take this subtle trade-off into consideration and yields sub-optimal regret in the data-poor regime. In general, optimism-based algorithms choose $A_t = \argmax_{a\in\cA}\max_{\tilde{\theta}\in\cC_t}\langle a, \tilde{\theta}\rangle,
$
where $\cC_t\subseteq \mathbb R^d$ is a confidence set that contains the true $\theta^*$ with high probability. We assume that there exists a constant $c>0$ such that
$\cC_t\subseteq \{\theta:(\hat{\theta}_t-\theta)^{\top}V_t(\hat{\theta}_t-\theta)\leq c\sqrt{s\log(n)}\},
$
where $V_t=\sum_{s=1}^tA_sA_s^{\top}$ and $\hat{\theta}_t$ is some sparsity-aware estimator. Such confidence set can be constructed through an online-to-confidence set conversion approach \citep{abbasi2012online}. Define $ \R_{\theta}(n; \pi)= \mathbb E[\sum_{t=1}^n \langle x^*, \theta\rangle - \sum_{t=1}^n Y_t]$ for a fixed $\theta$.
\begin{claim}\label{claim:sub-opt}
Let $\pi^{\text{opt}}$ be such an optimism-based algorithm. There exists a sparse linear bandit instance characterized by $\theta$ such that for the data-poor regime, we have $$
\R_{\theta}(n;\pi^{\text{opt}}) \gtrsim n/(\log(n)s\log(ed/s))\,.$$
\end{claim}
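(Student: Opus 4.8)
The plan is to prove the claim by a change-of-measure argument that exploits a structural fact specific to optimism: on this action set $\pi^{\mathrm{opt}}$ is \emph{forced} to under-sample the informative family $\cI$. Concretely, I would fix a pair of environments $\theta^{(0)},\theta^{(1)}\in\Theta$ (equivalently a two-point prior) chosen so that (i) for both, the optimal action lies in the uninformative set $\cU$ while every action of $\cI$ is suboptimal by a constant $\Delta_{\cI}=\Omega(1)$ (arranged through the last coordinate, on which all of $\cI$ is pinned and every $a\in\cU$ is zero), and (ii) $\theta^{(0)}$ and $\theta^{(1)}$ differ only along a support direction $v$ whose sign flips which action of $\cU$ is optimal, with separation $\|\theta^{(0)}-\theta^{(1)}\|\asymp\delta$ to be fixed later. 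The point is that $v$ is strongly excited by the restricted-eigenvalue property of $\cI$ but only weakly—with energy $O(s/d)$ per pull—by $\cU$, so that $\hat\theta_t$ can acquire decisive information about $v$ essentially only while it is sampling $\cI$.

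The core step is a structural lemma bounding how often $\pi^{\mathrm{opt}}$ plays $\cI$. Because the confidence radius $c\sqrt{s\log n}$ is \emph{fixed} (it does not grow with $t$), the optimistic index of any $a\in\cI$ is $\langle a,\hat\theta_t\rangle+\sqrt{c\sqrt{s\log n}}\,\|a\|_{V_t^{-1}}$, and once $\|a\|_{V_t^{-1}}\lesssim \Delta_{\cI}/(s\log n)^{1/4}$ this index drops below the value of the optimal action, after which $a$ is never selected again. Since each $a\in\cI$ reaches this threshold after $O(\sqrt{s\log n})$ of its own pulls, the total informative exploration is at most $\tilde{O}(|\cI|)=\tilde{O}(s\log(ed/s))$ rounds. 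Feeding this cap into the information inequality, the relative entropy between the laws of the history under the two environments is $\KL\lesssim |\cI|\delta^2+(ns/d)\delta^2$, which in the data-poor regime $n\ll d$ is dominated by the first term; choosing $\delta$ at the resolution threshold so that $\KL=O(1)$ renders $\theta^{(0)}$ and $\theta^{(1)}$ statistically indistinguishable to $\pi^{\mathrm{opt}}$.

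It then remains to convert indistinguishability into regret. By Pinsker, $\pi^{\mathrm{opt}}$ must, on at least one of the two environments, play the $\cU$-action that is optimal for the \emph{other} environment with constant probability in a constant fraction of the $n$ rounds; each such round is suboptimal by the gap $\asymp\delta$, so summing and taking the worse environment gives $\R_\theta(n;\pi^{\mathrm{opt}})\gtrsim n\,\delta\gtrsim n/(\log(n)\,s\log(ed/s))$, where the $\log n$ enters through the fixed radius $\sqrt{s\log n}$ in the per-pull information and $s\log(ed/s)=|\cI|$ is the informative budget that caps it. The main obstacle is the structural lemma, since this is exactly what makes the bound specific to optimism and lets it exceed the $\Theta(\poly(s)n^{2/3})$ minimax rate: one must control, uniformly in $t$, the width $\|a\|_{V_t^{-1}}$ simultaneously for the dense actions of $\cI$ (whose width collapses after few pulls, shared across all of them through the last coordinate) and for the sparse actions of $\cU$ (which must retain enough optimism to keep winning the $\arg\max$), and argue that no $\cI$-action can re-enter the play once discarded. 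Establishing this competition is where the real work lies; the final regret summation is routine.
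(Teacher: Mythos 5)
Your Step 1 is essentially the paper's own argument: the paper likewise shows that, because the confidence radius is fixed in $t$ while every action in $\cI$ is suboptimal by roughly a constant (through the pinned last coordinate), the optimistic index of each $a\in\cI$ drops below $\max_{a\in\cU}\langle a,\theta^*\rangle\geq s\varepsilon$ after $\mathrm{poly}(s,\log n)$ of its own pulls and that action is never selected again, so the informative budget is capped at $\tilde O(|\cI|)$ rounds. The genuine gap is in your Step 2: a \emph{fixed} two-point comparison cannot yield a near-linear lower bound on this instance. If flipping the sign of $\delta v$ changes the optimal action in $\cU$ from $a^0$ to $a^1$ with reward gap $\asymp\delta$, then $\langle a^0-a^1,v\rangle\asymp 1$ (after normalisation), so at least one of $|\langle a^0,v\rangle|,|\langle a^1,v\rangle|$ is $\Omega(1)$; say it is $a^0$. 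Running $\pi^{\text{opt}}$ under $\theta^{(0)}$, either $a^0$ is pulled $\Theta(n)$ times, in which case the KL between the two history laws is $\gtrsim n\delta^2\langle a^0,v\rangle^2=\Omega(n\delta^2)$ --- not $\lesssim|\cI|\delta^2+(ns/d)\delta^2$ as you assert --- which forces $\delta\lesssim n^{-1/2}$ and a regret bound of only $O(\sqrt{n})$; or $a^0$ is pulled $o(n)$ times, in which case you cannot conclude large regret, since $\cU$ contains actions within $O(\varepsilon)$ of optimal. The claimed ``energy $O(s/d)$ per $\cU$-pull'' is a property of an \emph{average} over a family of perturbation directions, never of a single fixed $v$: the heavily-pulled optimal action necessarily has constant overlap with whatever direction separates the two environments, precisely because that direction is what makes their optimal actions differ.

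This is exactly why the paper does not run a two-point argument but instead invokes the minimax lower-bound construction of Hao et al.\ (2020): there the alternative $\theta'$ is supported on a coordinate set $S$ chosen \emph{after} fixing the algorithm, by a pigeonhole/averaging argument over exponentially many candidate supports among the zero coordinates, so that the algorithm's $\cU$-pulls carry little energy on $S$, while each informative pull contributes $O(s\varepsilon^2)$ to the KL. Combined with the Step-1 cap $\sum_{a\in\cI}N_n(a)\lesssim |\cI|\,s\log(n)$, this yields $\R_{\theta}(n)+\R_{\theta'}(n)\gtrsim ns\varepsilon\exp\left(-2n\varepsilon^2s^2/d\right)$, and tuning $\varepsilon^2\asymp 1/(s^2\log(n)|\cI|)$ gives the claim in the data-poor regime. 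To repair your proof you would have to replace the fixed pair by this algorithm-dependent family of alternatives (perturbing the least-sensed coordinates); once you do, your argument coincides with the paper's.
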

The proof is deferred to Appendix \ref{sec:proof_claim}. The reason is that optimism-based algorithms do not choose actions for which they have collected enough statistics to prove these actions are suboptimal, but in the sparse linear setting it can be worth playing these actions when they are informative about other actions for which the statistics are not yet so clear. This phenomenon has been observed before in linear and structured bandits \citep{lattimore2017end, combes2017minimal, hao2019adaptive}.

\section{Information-directed sampling}

As shown by \cite{hao2020high}, although the explore-then-commit algorithm can achieve the minimax optimal regret in the data-poor regime, it suffers sub-optimal regret in the data-rich regime. This motivates us consider IDS.

\subsection{Design principle of IDS}\label{subsec:design_principle}
Unlike the optimism principle, IDS explicitly balances the amount of information it gains about the optimal action and expected single-round regret through minimizing a notion of information ratio. More formally, when playing action $a$, the \emph{information gain} $I_t(x^*; Y_{t,a})$ is the mutual information between the optimal action and the reward the agent receives for taking action $a$, and the expected  single-round regret is $\Delta_t(a):=\mathbb E_t[\langle x^*, \theta^*\rangle-\langle a, \theta^*\rangle]$. 
The information ratio is defined as
\begin{equation}\label{eqn:IDS}
     \Psi_{t,\lambda}(\pi) = \frac{(\Delta_t^{\top}\pi)^{\lambda}}{I_t^{\top}\pi}\,,
\end{equation}
where we write $\Delta_t\in\mathbb R^{|\cA|}$ and $I_t\in\mathbb R^{|\cA|}$ as corresponding vectors. Then IDS takes the action according to $ \pi_{t} = \argmin_{\pi\in\cD(\cA)}\Psi_{t,2}(\pi)$.
\begin{remark}
The information ratio defined in Eq.~\eqref{eqn:IDS} is a little more general than what \cite{russo2018learning} introduced which specified $\lambda=2$. As observed by \cite{lattimore2020mirror}, the right value of $\lambda$ depends on the dependence of the regret on the horizon. 
\end{remark}

\subsection{Information-theoretic Bayesian regret bound}
In this section, we derive a class of Bayesian regret upper bound for sparse IDS. We first define a notion of exploratory action set. 
\begin{definition}[Exploratory action set]\label{def:explortory}
Let $C_{\min}(\cA) = \max_{\mu\in\cD(\cA)} \sigma_{\min}(\mathbb E_{A\sim \mu}[AA^{\top}]).$ For an action set $\cA$, if $C_{\min}(\cA)\geq 1$, we say $\cA$ is exploratory.
\end{definition}

We say that $\cA$ has sparse optimal actions if the optimal action is $s$-sparse almost surely with respect to the prior. One can verify the action set of the hard instance developed in \cite{hao2020high}\footnote{The hard instance is almost the same as the one in illustrative example except the informative action set is a full hypercube.} is exploratory and has sparse optimal actions since sampling uniformly from the corner of informative action set shows that $C_{\min}(\cA) \geq 1$ and the optimal actions always come from uninformative action set, which is sparse.
\begin{theorem}[Regret bound for sparse IDS]\label{thm:IDS}
Suppose $\pi^{\text{IDS}} = (\pi_{t})_{t\in \mathbb N}$ where $\pi_{t} = \argmin_{\pi}\Psi_{t,2}(\pi)$. Let $\Delta = \min(\log(K), 2s\log(Cdn^{1/2}/s))$ for some absolute constant $C>0$. For an arbitrary action set, the following regret bound holds
\begin{equation*}
     \BR(n;\pi^{\text{IDS}}) \leq \sqrt{\frac{1}{2}nd\Delta}\,.
\end{equation*}
When $\cA$ is exploratory and has sparse optimal actions, the following regret bound holds
\begin{equation*}
\begin{split}
     \BR(n;\pi^{\text{IDS}}) \leq \min\left\{ \sqrt{\frac{1}{2}nd\Delta},\frac{s^{\frac{2}{3}}n^{\frac{2}{3}}\Delta^{\frac{1}{3}}}{(2C_{\min}(\cA))^{\frac{1}{3}}}\right\}\,.
\end{split}
\end{equation*}
\end{theorem}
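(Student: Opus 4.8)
The plan is to run the information-theoretic reduction of \cite{russo2018learning}, but to extract the two regimes from two different ways of controlling the information ratio. Throughout write $\delta_t = \Delta_t^\top \pi_t$ for the expected per-round regret and $g_t = I_t^\top \pi_t$ for the expected information gain, so that by definition of the IDS rule $\delta_t^2 = \Psi_{t,2}(\pi_t)\, g_t \le \Psi_{t,2}(\nu)\, g_t$ for every $\nu \in \mathcal{D}(\mathcal{A})$. The backbone is the generic conversion: by the chain rule and data processing, $\mathbb{E}\big[\sum_t g_t\big] = \mathbb{E}\big[\sum_t I_t(x^*; A_t, Y_t)\big] \le I(x^*; \mathcal{F}_{n+1}) \le H(x^*)$, so a uniform bound $\delta_t^{\lambda}/g_t \le \Gamma_\lambda$ combined with H\"older's inequality (exponents $\lambda$ and $\lambda/(\lambda-1)$) and Jensen gives $\BR(n;\pi^{\text{IDS}}) \le \Gamma_\lambda^{1/\lambda}\, H(x^*)^{1/\lambda}\, n^{1-1/\lambda}$. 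Taking $\lambda=2$ produces the $\sqrt{n}$ rate and $\lambda=3$ the $n^{2/3}$ rate, matching the two terms in the theorem.

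For the first bound I would bound $H(x^*) \le \Delta$ and the ratio by $\Gamma_2 \le d/2$. The entropy bound splits as $H(x^*) \le \log K$ (trivially, since $x^* \in \mathcal{A}$) and $H(x^*) \le 2s\log(Cdn^{1/2}/s)$, the latter via a rate-distortion argument: replace $x^*$ by the action induced by a discretisation of the $s$-sparse $\theta^*$ at resolution $\sim n^{-1/2}$, whose description length is $\le 2s\log(Cdn^{1/2}/s)$, and absorb the resulting distortion into constants. The ratio bound $\Psi_{t,2}(\pi_t) \le d/2$ is the standard linear-bandit computation: lower bound the information gain by the posterior variance of the mean reward, $I_t(x^*;Y_{t,a}) \ge \tfrac{1}{2}\sum_{a^*}\mathbb{P}_t(x^*=a^*)\langle a, \bar\theta_{a^*}-\bar\theta_t\rangle^2$ with $\bar\theta_{a^*}=\mathbb{E}_t[\theta^*\mid x^*=a^*]$ and $\bar\theta_t = \mathbb{E}_t[\theta^*]$, then take the Thompson distribution as comparator and cap a rank by $d$. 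Plugging $\Gamma_2 = d/2$ and $H(x^*) \le \Delta$ into the $\lambda=2$ conversion yields $\sqrt{\tfrac{1}{2}nd\Delta}$.

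For the second bound the improvement must come from the exploratory structure and the sparsity of $x^*$, and the goal is a \emph{uniform cubic} ratio $\Psi_{t,3}(\pi_t) \le s^2/(2C_{\min}(\mathcal{A}))$. The key device is to feed the exploratory distribution $\mu$ (with $\mathbb{E}_{A\sim\mu}[AA^\top] \succeq C_{\min}(\mathcal{A})\, I_d$) into the variance lower bound: averaging over $a \sim \mu$ gives $I_t^\top \mu \ge \tfrac{C_{\min}(\mathcal{A})}{2}\, \mathbb{V}_t$, where $\mathbb{V}_t := \sum_{a^*}\mathbb{P}_t(x^*=a^*)\|\bar\theta_{a^*}-\bar\theta_t\|_2^2$. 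Because the optimal actions are $s$-sparse, $\|a^*\|_2^2 \le s$, and the Thompson regret obeys $\Delta_t^\top p = \sum_{a^*}\mathbb{P}_t(x^*=a^*)\langle a^*, \bar\theta_{a^*}-\bar\theta_t\rangle \le \sqrt{s\,\mathbb{V}_t}$ by Cauchy--Schwarz. I would then take as comparator the exploration--exploitation \emph{mixture} $\nu_w = (1-w)p + w\mu$: its regret is $\lesssim \sqrt{s\,\mathbb{V}_t} + w\sqrt{s}$ and its gain is $\ge w\,C_{\min}(\mathcal{A})\,\mathbb{V}_t/2$, and optimising the exploration weight $w$ makes the posterior-variance factor $\mathbb{V}_t$ cancel, leaving a ratio of order $s^{O(1)}/C_{\min}(\mathcal{A})$ that is uniform in the posterior. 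Feeding $\Gamma_3 \asymp s^2/C_{\min}(\mathcal{A})$ and $H(x^*) \le \Delta$ into the $\lambda=3$ conversion gives the second argument, and taking the smaller of the two yields the stated minimum.

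The hard part will be making the powers line up to exactly $n^{2/3}$. The variance lower bound on information gain is \emph{quadratic} in the regret, so a naive pairing with the $\sqrt{s\,\mathbb{V}_t}$ regret bound only produces a fourth-power ratio and an $n^{3/4}$ rate; it is the $\mathbb{V}_t$-cancellation in the optimally weighted mixture that restores the cubic ratio, and this is precisely where both the exploratory and the sparse-optimal-action hypotheses are genuinely used. The subtler obstacle is transferring this cubic control to the actual policy: since $\pi_t$ minimises $\Psi_{t,2}$ rather than $\Psi_{t,3}$, I will need a lemma certifying that the $\Psi_{t,2}$-minimiser inherits the cubic guarantee of the mixture comparator (equivalently, that minimising the $\lambda=2$ ratio does not sacrifice the $\lambda=3$ rate), together with careful bookkeeping of the $s$-dependence to reach the exact constant $s^2/(2C_{\min}(\mathcal{A}))$. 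The remaining technical step is verifying the rate-distortion discretisation underlying the $2s\log(Cdn^{1/2}/s)$ surrogate, namely that quantising $\theta^*$ at resolution $n^{-1/2}$ perturbs the optimal value negligibly.
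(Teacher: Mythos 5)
Your skeleton --- the H\"older/chain-rule conversion, the $\lambda=2$ ratio bound via the Thompson comparator, and the cubic ratio via a TS--exploratory mixture with the $\mathbb{V}_t$-cancellation --- is exactly the paper's architecture, and your mixture argument matches the paper's Lemma \ref{lemma:information_ratio} up to constants. But the two steps you defer are respectively wrong and missing, and they are precisely where the paper does its real work. The first is your claim $H(x^*)\le 2s\log(Cdn^{1/2}/s)$: this is false in general, because the entropy of $x^*$ is set by the prior and the geometry of the action set, not by the horizon. If $\cA$ is a very fine net of the sphere (e.g.\ $\log K \gg s\log(dn)$), the distribution of $x^*$ can have entropy far exceeding $2s\log(Cdn^{1/2}/s)$; quantising $\theta^*$ at resolution $n^{-1/2}$ does not help, since $x^*$ is \emph{not} a deterministic function of the quantised parameter --- discretisation changes which action is optimal. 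This is exactly the failure mode the paper flags ("$H(x^*)$ can be arbitrarily large or even infinite"), and its Lemma \ref{lemma:bound_mutual_information} avoids entropy altogether: by data processing $I(x^*;\cF_{n+1})\le I(\theta^*;\cF_{n+1})$, and the latter is bounded by comparing the Bayes mixture against the mixture over a uniform prior on an $\varepsilon$-net $\cN_{\varepsilon}$ of $\Theta$, giving $I(\theta^*;\cF_{n+1})\le \log|\cN_{\varepsilon}|+\sup_{\theta}\min_{\tilde\theta\in\cN_{\varepsilon}}D_{\KL}(\mathbb P_{Z^n|\theta}\|\mathbb P_{Z^n|\tilde\theta})\le s\log(Cd/s\varepsilon)+ns\varepsilon^2/2$, optimised at $\varepsilon=n^{-1/2}$. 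The $n^{1/2}$ inside the logarithm comes from KL accumulation over $n$ rounds, not from a description length of $x^*$. Repairing your route by genuine rate--distortion (Dong--Van Roy) would require redefining the learning target as a surrogate optimal action and carrying the distortion through both the regret decomposition and the IDS optimality argument --- substantially more work, and exactly what the paper advertises avoiding.

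The second gap is the transfer lemma you explicitly say you "will need" but never supply: since $\pi_t$ minimises $\Psi_{t,2}$ and not $\Psi_{t,3}$, the mixture comparator only certifies $\min_{\pi}\Psi_{t,3}(\pi)\le s^2/(4C_{\min}(\cA))$, which says nothing yet about the played policy; without this step the entire $n^{2/3}$ regime is unproven. The paper closes it in the proof of Lemma \ref{lemma:generic} with a one-line KKT computation: because $\pi_t$ solves a convex program, the first-order condition $\langle\nabla\Psi_{t,2}(\pi_t),q_{\lambda,t}-\pi_t\rangle\ge 0$ applied to the $\Psi_{t,\lambda}$-minimiser $q_{\lambda,t}$ gives $2\langle q_{\lambda,t},\Delta_t\rangle\ge\langle\pi_t,\Delta_t\rangle$, and combining this with $\langle\pi_t,\Delta_t\rangle^2/\langle\pi_t,I_t\rangle\le\langle q_{\lambda,t},\Delta_t\rangle^2/\langle q_{\lambda,t},I_t\rangle$ yields
\begin{equation*}
\Psi_{t,\lambda}(\pi_t)\;\le\;2^{\lambda-2}\min_{\pi\in\cD(\cA)}\Psi_{t,\lambda}(\pi)\qquad\text{for every }\lambda\ge 2,
\end{equation*}
which is also the source of the factor $2^{1-2/\lambda}$ in the generic bound. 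This is the single idea your proposal is missing; with it, your conversion and your mixture bound assemble into the paper's proof (modulo constants: the paper lands on $s^2/(4C_{\min}(\cA))$, and its intermediate bounds carry an $R_{\max}$-dependence that you absorb).
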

This theorem shows the great adaptivity of IDS for sparse linear bandits in the sense that a single policy adapts to different information-regret structures. We summarize the regret bounds in a variety of of different regimes in Table \ref{table:IDS_result}.

\begin{remark}
The explore-then-comment algorithm proposed by \cite{hao2020high} for sparse linear bandits has $O(\poly(s)n^{2/3})$ regret bound when the action set is exploratory and it is known that this $O(n^{2/3})$ rate is not improvable. Thus, it is sub-optimal for data-rich regime comparing with $\Theta(\sqrt{sdn})$ minimax rate. In contrast, IDS is nearly optimal in both regimes. 
\end{remark}

\begin{table}\label{table:IDS_result}
\centering
\caption{Summary of regret bounds of IDS for different regimes. Data-rich regime refers to $n \gtrsim d^3\Delta/s^4$ and large $K$ refers to $K \gtrsim d\exp(s)$.}

\scalebox{0.94}{
\begin{tabular}{ |l|c|c|c| } 
 \hline & Arbitrary action set& Exploratory (data-rich) & Exploratory (data-poor)\\ 
 \hline
Large $K$ & $O(\sqrt{nds})$ & $O(\sqrt{nds})$ &  $O(sn^{2/3})$ \\
  \hline
Small $K$
  &$O(\sqrt{nd\log(K)})$ & $O(\sqrt{nd\log(K)})$ & $O(s^{2/3}n^{2/3}\log^{1/3}(K))$\\
 \hline
\end{tabular}}

\end{table}
As a direct application of our analysis, we also include a novel Bayesian regret bound for sparse TS.
\begin{corollary}[Regret bound for sparse TS]
For an arbitrary action set, the following regret bound holds for some absolute constant $C>0$
\begin{equation*}
     \BR(n;\pi^{\text{TS}}) \leq \sqrt{\frac{1}{2}nd\min(\log(K), 2s\log(Cdn^{1/2}/s))}\,.
\end{equation*}
\end{corollary}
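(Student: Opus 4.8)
The plan is to reuse, essentially verbatim, the information-theoretic machinery behind Theorem~\ref{thm:IDS}, since Thompson sampling is exactly the feasible policy whose information ratio controls the minimised IDS ratio. First I would record the policy-independent master inequality. For any policy playing $A_t\sim\pi_t$ whose single-round information ratio obeys $(\Delta_t^\top\pi_t)^2 \le \bar\Psi\,(I_t^\top\pi_t)$ for every $t$, Cauchy--Schwarz across rounds gives $\sum_{t=1}^n \Delta_t^\top\pi_t \le \sqrt{n\,\bar\Psi\,\sum_{t=1}^n I_t^\top\pi_t}$, and then Jensen's inequality together with the chain rule $\E[\sum_{t=1}^n I_t^\top\pi_t] = I(x^*;\cF_{n+1})$ yields
\begin{equation*}
\BR(n;\pi) \le \sqrt{\bar\Psi\, n\, I(x^*;\cF_{n+1})}\,.
\end{equation*}
This derivation is identical for IDS and TS; only $\bar\Psi$ and the bound on $I(x^*;\cF_{n+1})$ enter.

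Next I would supply the two inputs. For the ratio I would invoke the classical linear-bandit estimate $\Psi_{t,2}(\pi^{\text{TS}}) \le d/2$, which holds because under TS the sampling distribution coincides with the posterior probability of optimality. This is precisely the inequality certifying $\Psi_{t,2}(\pi^{\text{IDS}}) = \min_\pi \Psi_{t,2}(\pi) \le \Psi_{t,2}(\pi^{\text{TS}}) \le d/2$ inside the proof of Theorem~\ref{thm:IDS}, so it requires no new work. For the information term, which does not depend on the policy, I would reuse the same bound $I(x^*;\cF_{n+1}) \le \Delta = \min(\log K,\, 2s\log(Cdn^{1/2}/s))$ established there. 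Substituting $\bar\Psi = d/2$ and this $\Delta$ into the master inequality gives $\BR(n;\pi^{\text{TS}}) \le \sqrt{\frac{1}{2}\,n\,d\,\Delta}$, which is the claim.

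Because both ingredients are inherited, the corollary is genuinely a direct application, and the only substantive content lies in the two inputs, both already carried out for the theorem. Of these the delicate one is the sparse branch $2s\log(Cdn^{1/2}/s)$ of $\Delta$: the naive estimate $I(x^*;\cF_{n+1}) \le H(x^*) \le \log K$ exposes no sparsity, so one instead coarsens the learning target to an $s$-sparse quantisation at resolution $\varepsilon \asymp n^{-1/2}$ and bounds its entropy by the log-covering number $\approx s\log(Cd/(s\varepsilon))$ of $s$-sparse unit vectors, while the induced quantisation error is kept lower order by the choice of $\varepsilon$. Balancing the quantisation bias against the horizon is what produces the factor $2$ and the constant $C$, and it is the one estimate where genuine care is required; for the corollary itself, however, it is enough to cite the $\Delta$ already proven for IDS.
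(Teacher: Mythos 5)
Your proposal is correct and follows essentially the same route the paper intends when it calls the corollary ``a direct application of our analysis'': you combine the TS information-ratio bound $\Psi_{t,2}(\pi^{\text{TS}})\leq d/2$ (the same Russo--Van Roy Proposition~5 estimate used in Lemma~\ref{lemma:information_ratio}) with the policy-independent mutual-information bound of Lemma~\ref{lemma:bound_mutual_information}, via the Cauchy--Schwarz/Jensen/chain-rule argument that is exactly the $\lambda=2$ specialization of the proof of Lemma~\ref{lemma:generic}. You also correctly recognized the one subtlety, namely that Lemma~\ref{lemma:generic} as stated applies only to IDS (its $2^{1-2/\lambda}$ factor comes from IDS's first-order optimality), so for TS the master inequality must be rederived from a per-round ratio bound on $\pi^{\text{TS}}$ itself, which you do; the only inaccuracy is your side remark describing Lemma~\ref{lemma:bound_mutual_information} as an entropy-of-quantization argument, whereas the paper bounds $I(\theta^*;\cF_{n+1})$ by the average KL to a mixture over an $\varepsilon$-net (Yang--Barron style, explicitly avoiding rate-distortion/quantization), but since you only cite that lemma this does not affect the proof.
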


\begin{proof}[Proof of Theorem \ref{thm:IDS}]
We prove our main result in three steps. All the proofs of technical lemmas are deferred to the appendix.

\paragraph{Step 1: Generic Bayesian regret upper bound} We define $\Psi_{*,\lambda}\in \mathbb R$ as the \emph{worse-case information ratio} such that for each $t\in[n]$, $\Psi_{t,\lambda}(\pi_t)\leq \Psi_{*,\lambda}$ almost surely. 
\begin{lemma}\label{lemma:generic}
Suppose $\pi^{\text{IDS}} = (\pi_{t})_{t\in \mathbb N}$ where $\pi_{t} = \argmin_{\pi}\Psi_{t,2}(\pi)$.  Then the following regret bound holds
\begin{equation*}
    \BR(n;\pi^{\text{IDS}}) \leq \inf_{\lambda\geq 2}2^{1-2/\lambda}\Psi^{1/\lambda}_{*,\lambda}I(x^*; \cF_{n+1})^{1/\lambda}n^{1-1/\lambda}\,,
\end{equation*}
where $\cF_{n+1}$ refers to the history.
\end{lemma}
This lemma demonstrates the adaptivity of a single IDS for different information ratios. The choice of $\lambda$ essentially trades off the information-ratio and the horizon. 

\paragraph{Step 2: Bounding the worse-case information ratio} We bound the worse-case information ratio for different $\lambda$. It  shows that for certain action sets, the worse-case information ratio with $\lambda=3$ could be much smaller than the one with $\lambda=2$.
\begin{lemma}\label{lemma:information_ratio}
For an arbitrary action set, we have $ \Psi_{*,2}\leq d/2$. For an exploratory action set that has sparse optimal actions, we have $\Psi_{*,3}\leq s^2/(4C_{\min}(\cA)).
$
\end{lemma}
The bound of $\Psi_{*,2}$ essentially follows  \citet[Proposition 5]{russo2014learning} that bounds the information ratio of IDS by TS. And $\Psi_{*,3}$ is bounded by the information ratio of a mixture policy $\pi_t^{\mix} = (1-\gamma)\pi_t^{\text{TS}}+\gamma \mu$ where $\mu$ is an exploratory policy such that $\sigma_{\min}(\mathbb E_{A\sim \mu}[AA^{\top}])$ is a constant and the mixture rate $\gamma\geq 0$ is optimized to minimize the bound.
 
\paragraph{Step 3: Bounding the mutual information} The mutual information $I(x^*;\cF_{n+1})$ quantifies the cumulative information gain about the optimal action. \cite{russo2014learning,russo2018learning} naively bound this term by entropy $H(x^*)$, which can be arbitrarily large or even infinite for some priors. Instead, we bound this term by the mutual information between the true parameter and the history through data-processing lemma.
\begin{lemma}\label{lemma:bound_mutual_information}
$I(x^*; \cF_{n+1}) \leq I(\theta^*; \cF_{n+1})\leq \min\{\log(K), 2s\log(Cdn^{1/2}/s)\}$.
\end{lemma}
Our proof is based on the metric entropy of the parameter space and square root KL-divergence that is commonly used in information-theoretic lower bound analysis \citep{yang1999information}. As a by-product of our analysis, by setting $s=d$, our analysis recovers the $\tilde{O}(d\sqrt{n})$ Bayesian regret bound for TS under linear bandits with infinitely many actions without using rate-distortion theory \citep{dong2018information}. 
Combining Lemmas \ref{lemma:generic}-\ref{lemma:bound_mutual_information} yields our conclusion.
\end{proof}

\section{Computational methods}\label{sec:computation}
In this section, we provide an efficient implementation of sparse IDS. The main challenge is to generate posterior samples in a computationally efficient manner to approximate the information ratio. Due to the lack of conjugate prior, we propose an empirical Bayesian approach for sparse sampling with spike-and-slab priors.

\subsection{An empirical Bayesian approach for sparse sampling}

In the Bayesian framework, the golden standard for modeling $\theta^*$ is to place spike-and-slab priors \citep{Mitchell1988}. With a hierarchical structure over the parameter and model space, vanilla spike-and-slab priors usually have the following form
\begin{equation}\label{eqn:generic_form}
\begin{split}
     \rho(\theta|\bgamma, \sigma^2) = \prod_{j=1}^d\left[\gamma_j\psi_1(\theta_j, \sigma)+(1-\gamma_j)\psi_0(\theta_j,\sigma)\right],
     \rho(\bgamma|\beta) = \prod_{j=1}^d\beta^{\gamma_j}(1-\beta)^{1-\gamma_j}\,,
\end{split}
\end{equation}
where $\bgamma = (\gamma_1,\ldots, \gamma_d)^{\top}$ is an intermediate binary vector that indexs the $2^d$ possible models and $\beta\in[0, 1]$ denotes a priori fraction of relevant variables among all the parameters. In particular, $\psi_1(\theta, \sigma)$ serves as a slab distribution to models relevant variables and $\psi_0(\theta, \sigma)$ is a point mass at zero that serves as a spike distribution to model irrelevant variables.
\begin{remark}
It is typical to assume $\beta$ follows a Beta prior as  $\beta\sim\text{Beta}(a, b)$ and variance $\sigma^2$ follows an inverse gamma prior $\rho(\sigma^2)= \text{IG}(\nu/2, \nu\lambda/2)$ with $\nu=1$ and $\lambda=1$ (\cite{rovckova2014spike}). For simplicity, we do not impose those additional layers of priors.
\end{remark}

\textbf{Prior specification}. Although the prior in Eq.~\eqref{eqn:generic_form} is theoretically sound, exploring the full posterior in high-dimensions over the entire model space using point-mass spike-and-slab priors can be computationally prohibitive. Therefore, the spike distribution is usually relaxed as a small-scale Gaussian distribution \citep{rovckova2014spike} or Laplace distribution \citep{rovckova2018spike}. Thus, we consider a spike-and-slab Gaussian-Laplace prior that inherits the property of the Lasso \citep{tibshirani1996} for variable selection while the Gaussian component avoids the potential over-shrinkage effect. Each component of the prior is specified as
\begin{equation*} 
    \psi_0(\theta, \sigma)=\frac{1}{2\sigma\lambda_0}\exp\left(-\frac{|\theta|}{\sigma\lambda_0}\right), \psi_1(\theta, \sigma)=\frac{1}{\sqrt{2\pi\sigma^2\lambda_1}}\exp\left(-\frac{\theta^2}{2\sigma^2\lambda_1}\right)\,,
\end{equation*}
 where $\lambda_0>0$ denotes a scaling parameter that encourages the shrinkage of irrelevant parameters and $\lambda_1$ is often set to a large value for a standard regularization \citep{rovckova2014spike}.

\textbf{An empirical Bayesian approach.} Suppose $\cL(\cF_{n+1}|\theta,\sigma^2)$ is the likelihood function where $Z_n$ is the historical data. According to the Bayes rule, the full posterior follows
\begin{equation}\label{eqn:full_posterior}
    p(\theta, \bgamma|\cF_{n+1}, \sigma^2, \beta)\propto \cL(\cF_{n+1}|\theta, \sigma^2)\rho(\theta|\bgamma,\sigma^2)\rho(\bgamma|\beta)\,.
\end{equation}
To speed up the sampling, we only sample $\theta$ and optimize $\bgamma$ with respect to the posterior instead. To tackle this issue of the binary vector $\bgamma$, we consider a continuous relaxation by introducing a latent vector $\nu\in[0,1]^d$ as the probability of the variable being included in the model. We focus on the simulations of the conditional expectation of the complete posterior:
\begin{equation*}
\E_{\bgamma|\cdot}\left[\log p(\theta, \bgamma|\cF_{n+1}, \sigma^2, \beta)\right]=\log \cL(\cF_{n+1}|\theta,\sigma^2)+\E_{\bgamma|\cdot}\left[\log\rho(\bgamma|\beta)+\log\rho(\theta|\bgamma,\sigma^2)\right]+C_1\,,
\end{equation*}
where $C_1$ is the normalizing constant and $\E_{\bgamma|\cdot}[\cdot]$ denotes the the conditional expectation with respect to $\gamma$ given the current parameter $\theta$. Then we compute
\begin{equation*}
    \E_{\bgamma|\cdot}\left[\log\rho(\bgamma|\beta)\right]=\sum_{i=1}^d\E_{\bgamma|\cdot}\left[\gamma_i \log(\beta)+(1-\gamma_i)\log(1-\beta)\right]=\sum_{i=1}^d \log\left(\dfrac{\beta}{1-\beta}\right) \nu_i+C_2\,,
\end{equation*}
where $C_2=d\log(1-\beta)$ and $\nu_i=\E_{\bgamma|\cdot}[\gamma_{i}]$ denotes a conditional probability. By the Bayes rule, we have
\begin{equation}
\label{prob_non_sparse}
    \nu_i=\mathbb P(\gamma_i=1|\theta_i,\beta)=\frac{\rho(\theta_i|\gamma_i=1) \mathbb P(\gamma_i=1|\beta)}{\rho(\theta_i|\gamma_i=1) \mathbb P(\gamma_i=1|\beta)+\rho(\theta_i|\gamma_i=0) \mathbb P(\gamma_i=0|\beta)}\,.
\end{equation}
For the mixture prior, we optimize the variational lower bound
\begin{equation*}
\begin{split}
\mathbb E_{\bgamma|\cdot}[\log\rho(\theta|\bgamma,\sigma^2)]\geq
    \sum_{j=1}^d -\frac{1-\nu_j}{\lambda_0} \frac{|\theta_{j}|}{\sigma} - \frac{\nu_j}{\lambda_1}\frac{\theta_{j}^2}{{2\sigma^2}}+C_3\,,
\end{split}
\end{equation*}
where the inequality follows by Jensen's inequality and $C_3$ denotes a trivial constant.

We summarize the full sampling procedure in Algorithm \ref{EB_SSGL}. Given a current estimate of $(\theta^{(k)}, \nu^{(k)})$ at step $k$ and $\cF_{n+1}$, we adapt an empirical Bayesian method \citep{deng2019} by iteratively sampling $\theta$ based on the negative log-posterior with adaptive priors
\begin{equation*}
    Q(\theta|\theta^{(k)}, \nu^{(k)}, \cF_{n+1})=-\log \cL(\cF_{n+1}|\theta,\sigma^2)+\sum_{i=1}^d\left(\frac{1-\rho_i}{\lambda_0} \frac{|\theta_{i}|}{\sigma} + \frac{\rho_i}{\lambda_1}\frac{\theta_{i}^2}{{2\sigma^2}}\right)\,,
\end{equation*}
and optimizing the conditional probability $\nu$ through stochastic approximation algorithms (\cite{RobbinsM1951}) until the equilibrium is achieved. 

\begin{algorithm}[htb!]
    	\caption{Empirical Bayesian sparse sampling}
	\begin{algorithmic}[1]\label{EB_SSGL}
		\STATE
		\textbf{Input:} dataset $\cF_{n+1}$, learning rate $(\eta_k)$, step size $(\omega_k)$, a priori knowledge of $\sigma^2$ and $\beta$, thinning factor $T$, number of posterior samples $M$, regularization parameters $\lambda_0$, $\lambda_1$.
		\STATE \textbf{Initialize:} $\theta^{(0)}\sim N(0, 0.1 I_{d})$ and $\nu^{(0)} = (0.5, \ldots, 0.5)^{\top}$.
		\FOR{$k \geq 1$}
		\STATE
		Sampling step: $
		    \theta^{(k+1)}=\theta^{(k)} -\eta_k \frac{\partial }{\partial \theta} Q(\theta|\theta^{(k)}, \nu^{(k)}, \cF_{n+1}) + \sqrt{2\eta_k} \xi^{(k)},
		$
		 where $\xi^{(k)}$ is a standard Gaussian random vector.
		\STATE
		Stochastic approximation step:
		$
		\nu^{(k+1)}=(1-\omega_k) \nu^{(k)} + \omega_k \nu,
		$
 where $\nu$ is derived from Eq.~\eqref{prob_non_sparse}.
		\ENDFOR
			\STATE	\textbf{Output:} $M$ posterior samples $\theta^{(T)}, \theta^{(2T)}, \ldots, \theta^{(MT)}$.
	\end{algorithmic}
\end{algorithm}

\subsection{Optimize the information ratio}

For sparse linear bandits, it is expensive to estimate and optimize the original information ratio that involves the calculation of KL-divergence. Following Section 6.3 in \cite{russo2018learning}, we  optimize a variance-based information ratio instead:
$\pi_t = \argmin_{\pi}(\Delta_t^{\top}\pi)^2/(2v_t^{\top}\pi),
$
where we define $v_t(a) = \mathbb E_t[a^{\top}\mathbb E_t[\theta^*|x^*]-a^{\top}\mathbb E_t[\theta^*]]^2$ for each $a\in\cA$. With sufficient number of posterior samples of $\theta^*$ produced by Algorithm \ref{EB_SSGL}, we can accurately estimate $\mathbb E_t[\theta^*|x^*], \mathbb E_t[\theta^*]$ and the information ratio. The detailed procedure is deferred to Appendix \ref{sec:detailed_algorithm}. For the optimization step, we simply choose the action who can has the minimum per-action variance-based information ratio to accelerate the computation.  The overall algorithm can be found in Algorithm \ref{alg:ids_final}.
{\small
\begin{algorithm}[htb!]
	\caption{Sparse IDS}
	\begin{algorithmic}[1]\label{alg:ids_final}
		\STATE
		\textbf{Input:} time horizon $n$, action set $\cA$, number of posterior samples per round $M$.
		\FOR{$t= 1, \cdots, n$}
		\STATE Obtain $M$ posterior samples $\theta^1,\ldots, \theta^M$ by Algorithm \ref{EB_SSGL}.
		\STATE Calculate $\hat{\Delta}_t\in\mathbb R^{|\cA|}$ and $\hat{v}_t\in\mathbb R^{|\cA|}$ using Algorithm \ref{alg:est_inf_ratio} in Appendix \ref{sec:detailed_algorithm}.
\STATE Take the action $  A_t= \argmin_{a\in\cA}\hat{\Delta}_t^2(a)/\hat{v}_t(a)$ and receive a reward: $Y_t = \langle A_t, \theta^*\rangle + \eta_t.$
\ENDFOR
	\end{algorithmic}
\end{algorithm}
}

\section{Experiments}
 
\begin{figure}
 \centering
 \includegraphics[width=0.32\linewidth]{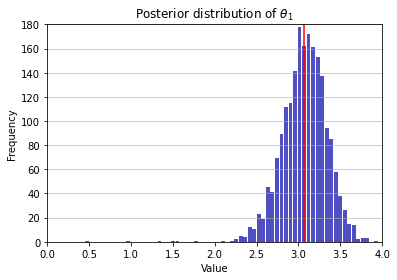}
  \includegraphics[width=0.32\linewidth]{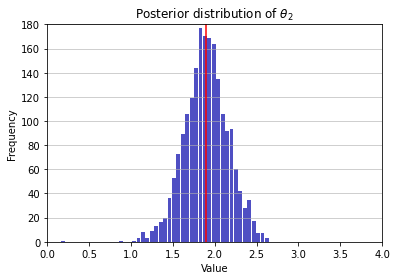}
     \includegraphics[width=0.32\linewidth]{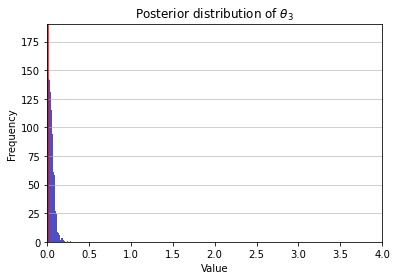}
\caption{Posterior distributions for the first three coordinates. The red lines are posterior means.}
\label{fig:posterior}
\end{figure}
First, we evaluate the performance of the empirical Bayesian sparse sampling procedure for generating posterior samples through an offline sparse linear regression. We set $d=10, s=3, n=100$ and the actions are drawn i.i.d from a multivariate normal distribution $N(0, \Sigma)$ with $\Sigma_{ij}= 0.6^{|i-j|}$. The true parameter is $\theta^* = (3,2,0,0,\ldots, 0)\in\mathbb R^{10}$. We plot the empirical posterior distributions as well as their posterior mean for the first three covariates in Figure \ref{fig:posterior}. It shows that the posterior distribution concentrates well around the true value and the algorithm identifies the sparse pattern quickly. 

Second, we evaluate sparse IDS with several other competitors. In particular, we compare with LinUCB \citep{abbasi2011improved}, LinTS with Gaussian prior \citep{agrawal2013thompson}, IDS for linear bandits (Algorithm 6 in \cite{russo2018learning}) and ESTC \citep{hao2020high}. Note that the first three algorithms are not sparsity-aware. Our sparse sampling procedure naturally induces a sparse TS algorithm so we include it into comparison. 

\paragraph{Setting} All the true parameters are randomly generated from a multivariate normal distribution, truncated to be sparse and normalized to have square norm 1. The noise variance is fixed to be 2 and we replicate the experiments over 200 trials. We plot the empirical cumulative Bayesian regret.  Each Bayesian algorithm will take 10000 posterior samples. We use the TS without blow-up factor for the variance and tune the length of confidence interval of LinUCB over a candidate set.

\paragraph{Hard sparse linear bandits instance} Consider the hard problem instance introduced in Section \ref{sec:tradeoff} that
includes informative and uninformative action sets and set $d=10, s=2$.  For each trial, we record the number of pulls of sparse TS and sparse IDS for informative actions. We draw the histogram of number of pulls during 200 trial in Figure \ref{fig:hard_instance_d5}. It is clear that IDS tends to invest more on the informative action sets but suffer less regret than TS if there exists an information-regret trade-off phenomenon. 
    \begin{figure}
 \centering
 \includegraphics[width=0.32\linewidth]{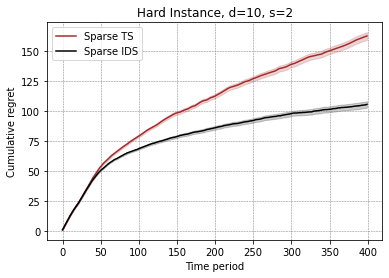}
  \includegraphics[width=0.32\linewidth]{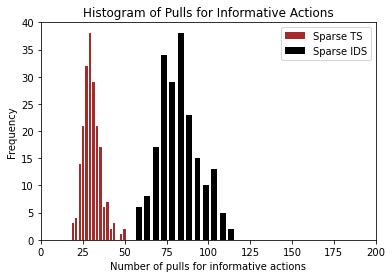}
\caption{The left panel is the cumulative regret and the right panel is the histogram of number of pulls for informative actions. It's clear that sparse TS does not value information enough as it should.}
\label{fig:hard_instance_d5}
\end{figure}

\paragraph{Multivariate Gaussian action set} We consider a more general case where each action is generated from multivariate normal distribution $N(0, \Sigma)$ with $\Sigma_{ij}= 0.6^{|i-j|}$. The number of actions $K$ is fixed to be 200 and the level of sparsity $s/d$ is fixed to be 0.1. We report the results in Figure \ref{fig:MG} for $d=20, 40, 100$. It is obvious that sparse IDS consistently outperforms other algorithms and the improvement increases as the feature dimension increases. ESTC performs better than non sparsity-aware algorithms in the data-poor regime but perform poorly for the data-rich horizon.

     \begin{figure}
 \centering
 \includegraphics[width=0.32\linewidth]{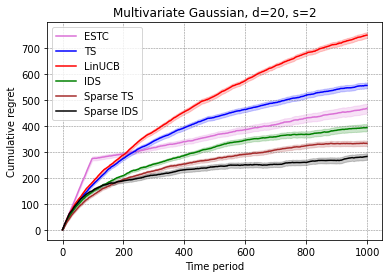}
  \includegraphics[width=0.32\linewidth]{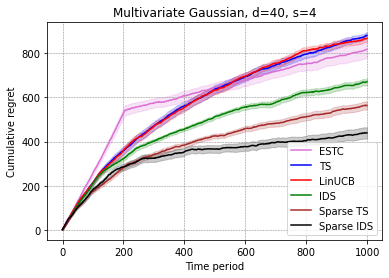}
    \includegraphics[width=0.32\linewidth]{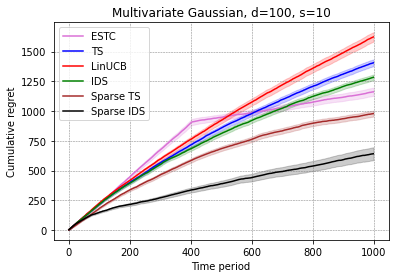}
\caption{Cumulative regret for $d=20, 40, 100$.}
\label{fig:MG}
\end{figure}

\section{Conclusion}
In this work, we investigate the theoretic and practical applicability of information-directed sampling for sparse linear bandits. An interesting future direction is to extend similar ideas to sparse linear contextual bandits.

\bibliographystyle{plainnat}
{\small
\bibliography{ref}
}

\newpage
\appendix

\section{Proofs}

\subsection{Proof of Claim \ref{claim:sub-opt}}\label{sec:proof_claim}

We first define the notion of restricted minimum eigenvalue.
\begin{definition}[Restricted minimum eigenvalue]\label{def:re}
Given a symmetric matrix $H\in\mathbb R^{d\times d}$ and integer $s\geq 1$, and $L>0$, the restricted minimum eigenvalue of $H$ is defined as
\begin{equation*}
    \phi^2(H, s, L):=\min_{\cS\subset [d], |\cS|\leq s}\min_{\theta\in\mathbb R^d}\Big\{\frac{\langle \theta, H\theta \rangle}{\|\theta_{\cS}\|_2^2}: \theta\in\mathbb R^d, \|\theta_{\cS^c}\|_1\leq L\|\theta_{\cS}\|_1\Big\}.
\end{equation*}
\end{definition}
Suppose $\{x^{(t)}\}_{t=1}^k\subseteq \mathbb R^d$ are $k$ independent random vectors who first $d-1$ coordinates are drawn uniformly from $\{-1, 1\}$ and the last coordinate is 1. Denote $\hat{\Sigma} = \sum_{t=1}^kx^{(t)}x^{(t)\top}$. It is easy to see $\mathbb E[\hat{\Sigma}]=I_{d}$ and $\sigma_{\min}(\mathbb E[\hat{\Sigma}])=1$. From the definition of restricted minimum eigenvalue, we have for any $L>0$,
\begin{equation*}
    \phi^2(\mathbb E[\hat{\Sigma}], s, L) \geq \sigma_{\min}^2(\mathbb E[\hat{\Sigma}]) = 1.
\end{equation*}
According to Theorem 10 in \cite{javanmard2014confidence} (essentially from Theorem 6 in \cite{rudelson2013reconstruction}), if the population covariance matrix 
satisfies the restricted eigenvalue condition, the empirical covariance matrix satisfies it as well with high
probability. Specifically, when $k= C_1 s \log(ed/s)$ for some large constant $C_1>0$,  the following holds:
\begin{equation*}
    \mathbb P\left(\phi^2(\hat{\Sigma}, s, 3)\geq \frac{1}{4}\right)\geq 1-2\exp(-k/C_1)\geq 0.5.
\end{equation*}
According to probabilistic argument, there exists a set of fixed actions $\{x^{(1)}, \ldots, x^{(k)}\}$ with  $k= C_1 s\log(ed/s)$ such that if we pull uniformly at random from them, the restricted minimum eigenvalue of the resulting covariance matrix is at least 1/4.

Next we compute how many rounds at most the optimism-based algorithm will choose from  informative action set $\cI$. Let $N_{t-1}(a)$ as the number of pulls for action $a$ until round $t$. Since we have $\theta^*\in\cC_t$ with high probability, then
\begin{equation*}
    \max_{a\in\cU}\max_{\tilde{\theta}\in\cC_t}\langle a, \tilde{\theta}\rangle\geq  \max_{a\in\cU} \langle a, \theta^*\rangle \geq s\varepsilon.
\end{equation*}
On the other hand for any action $a\in\cI$,
\begin{equation*}
\begin{split}
     \max_{\tilde{\theta}\in\cC_t}\langle a, \tilde{\theta}\rangle &= \max_{\tilde{\theta}\in\cC_t}\langle a, \tilde{\theta}-\theta^*\rangle+ \langle a, \theta^*\rangle \leq \max_{\tilde{\theta}\in\cC_t}\langle a, \tilde{\theta}-\theta^*\rangle+ \max_{a\in\cI}\langle a, \theta^*\rangle\\
     &=\max_{\tilde{\theta}\in\cC_t}\langle a, \tilde{\theta}-\theta^*\rangle+s\varepsilon-1\leq 2c\sqrt{\|a\|_{V_t^{-1}}s\log(n)}+s\varepsilon -1\\
     &\leq 2c\sqrt{\frac{s\log(n)}{N_{t-1}(a)}}+s\varepsilon-1.
\end{split}
\end{equation*}
If $N_{t-1}(a) >4c^2s\log(n)$ for $a\in\cI$, then we have $\max_{\tilde{\theta}\in\cC_t}\langle a, \tilde{\theta}\rangle < s\varepsilon$. Based on the optimism principle, the algorithm will switch to pull uninformative actions. This leads to the fact that optimism-based algorithm will pull at most $|\cI|4c^2s\log(n)$ rounds of information actions. According to the proof of minimax lower bound in \cite{hao2020high}, we have when $\sum_{a\in\cI}N_n(a)<1/(s\varepsilon^2)$, there exists another sparse parameter $\theta'$ such that \begin{equation}\label{eqn:case1}
      R_{\theta}(n) +  R_{\theta'}(n) \gtrsim ns\varepsilon \exp\left(-\frac{2n\varepsilon^2s^2}{d} \right) \,.
\end{equation}
By choosing $\varepsilon = \sqrt{1/(s^2\log(n)4c^2|\cI|)}$, we have for $d\geq n/(s\log(n)\log(ed/s))$
\begin{equation}\label{eqn:case1}
      R_{\theta}(n) +  R_{\theta'}(n) \gtrsim \frac{n}{\sqrt{\log(n)}|\cI|}\,.
\end{equation}
Note that $|\cI|=O(s\log(d/s))$ as we proved before. Then we can argue there exists a sparse linear bandit instance such that optimism-based algorithm will suffer linear regret for a data-poor regime. This ends the proof.

\subsection{Proof of Lemma  \ref{lemma:generic}}\label{sec:proof_main}

We decompose the Bayesian regret in terms of the instantaneous regret:
\begin{equation}\label{eqn:regret_decom}
\begin{split}
     \BR(n;\pi^{\text{IDS}}) &= \mathbb E\left[\sum_{t=1}^n\langle x^*, \theta\rangle - \sum_{t=1}^n Y_t\right] =\mathbb E\left[\sum_{t=1}^n\mathbb E_t\left[\langle x^*, \theta^*\rangle - Y_t\right]\right]\\
     &= \mathbb E\left[\sum_{t=1}^n\sum_a\mathbb E_t\left[\langle x^*, \theta^*\rangle - \langle a,\theta^*\rangle\right]\pi_t(a)\right]=\mathbb E\left[\sum_{t=1}^n\langle \pi_t, \Delta_t\rangle\right],
     \end{split}
\end{equation}
where the third equation is due to the zero mean of the noise.

We then bound one-step instantaneous regret. From the definition of $\pi_t$, we have
\begin{equation}\label{eqn:def_ir}
    \pi_t = \argmin_{\pi\in \cD(\cA)}\frac{\langle \pi, \Delta_t\rangle^2}{\langle \pi, I_t\rangle}.
\end{equation}
In addition, we denote 
\begin{equation}\label{eqn:def_gir}
    q_{\lambda, t} =\argmin_{\pi\in\cD(\cA)}\Psi_{t,\lambda}(\pi)= \argmin_{\pi\in \cD(\cA)}\frac{\langle \pi, \Delta_t \rangle^{\lambda}}{\langle \pi, I_t \rangle}.
\end{equation}
Note that 
\begin{equation*}
    \nabla_{\pi}\Psi_{t,2}(\pi) = \frac{2\langle \pi, \Delta_t \rangle\Delta_t}{\langle \pi, I_t\rangle} + \frac{\langle \pi, \Delta_t\rangle^2I_t}{\langle \pi, I_t\rangle^2}.
\end{equation*}
By the first-order optimality condition in Lemma \ref{lemma:first-order}, 
\begin{equation*}
    0\leq \left\langle \nabla_{\pi}\Psi_{t,2}(\pi_t), q_{\lambda, t}-\pi_t\right\rangle = \frac{2\langle q_{\lambda, t}-\pi_t, \Delta_t\rangle\langle \pi_t, \Delta_t\rangle}{\langle \pi_t, I_t\rangle} - \frac{\langle q_{\lambda, t}-\pi_t, I_t\rangle\langle \pi_t, \Delta_t\rangle^2 }{\langle \pi_t, I_t\rangle^2}.
\end{equation*}
This further implies
\begin{equation*}
   2\langle q_{\lambda, t}, \Delta_t\rangle\geq \langle \pi_t, \Delta_t\rangle\Big(1+\frac{\langle q_{\lambda,t}, I_t\rangle}{\langle \pi_t, I_t\rangle}\Big)\geq \langle \pi_t, \Delta_t\rangle.
\end{equation*}
Based on the above equation, we can bound the generalized information ratio as follows:
\begin{equation*}
    \begin{split}
        \frac{\langle \pi_t, \Delta_t\rangle^{\lambda}}{\langle \pi_t, I_t\rangle}&=\frac{\langle \pi_t,\Delta_t\rangle^2\langle \pi_t, \Delta_t\rangle^{\lambda-2}}{\langle \pi_t, I_t \rangle}\leq \frac{2^{\lambda-2}\langle \pi_t,\Delta_t\rangle^2\langle q_{\lambda, t}, \Delta_t\rangle^{\lambda-2}}{\langle \pi_t, I_t\rangle}\\
        &\leq \frac{2^{\lambda-2}\langle q_{\lambda,t}, \Delta_t\rangle^{\lambda-2}\langle q_{\lambda, t}, \Delta_t \rangle^2}{\langle q_{\lambda, t}, I_t\rangle} = 2^{\lambda-2}\min_{\pi\in\cD(\cA)}\frac{\langle \pi,\Delta_t\rangle^{\lambda}}{\langle \pi, I_t \rangle},
    \end{split}
\end{equation*}
where the first inequality is from Eq.~\eqref{eqn:def_ir} and the second inequality is from Eq.~\eqref{eqn:def_gir}.
According to the definition of $\Psi_{*,\lambda}$, we have
\begin{equation*}
   \langle \pi_t, \Delta_t\rangle\leq 2^{1-2/\lambda}\langle \pi_t, I_t\rangle^{1/\lambda}\Psi_{*,\lambda}^{1/\lambda}.
\end{equation*}

Next we prove $\langle \pi_t, I_t\rangle = I_t(x^*; (A_t, Y_t))$. By the chain rule of mutual information,
\begin{equation*}
    \begin{split}
        I_t(x^*;(A_t, Y_t)) &= I_t(x^*;A_t)+I_t(x^*; Y_t|A_t) = I_t(x^*; Y_t|A_t)\\
        &=\sum_{a\in\cA}\pi_t(a)I_t(x^*; Y_t|A_t=a),
    \end{split}
\end{equation*}
where we use the fact that $A_t$ and $x^*$ are independent. If $Z$ is independent of $X$ and $Y$, then we have $I(X;Y|Z) = I(X;Y)$. Since $A_t$ is independent of $x^*$ and $Y_t$ conditional on $\cF_t$, then 
\begin{equation*}
    \sum_{a\in\cA}\pi_t(a)I_t(x^*; Y_t|A_t=a) =  \sum_{a\in\cA}\pi_t(a)I_t(x^*; Y_{t,a}) = \langle \pi_t, I_t\rangle.
\end{equation*}
This proves the previous claim. Combining with Eq.~\eqref{eqn:regret_decom},
\begin{equation}\label{eqn:regret_decom_2}
\begin{split}
       \BR(n;\pi^{\text{IDS}}) &= \mathbb E\left[\sum_{t=1}^n\langle \pi_t, \Delta_t\rangle\right]\leq \mathbb E\left[\sum_{t=1}^n 2^{1-2/\lambda}I_t(x^*; (A_t, Y_t))^{1/\lambda}\Psi_{*,\lambda}^{1/\lambda}\right]\\
     &=2^{1-2/\lambda} \Psi_{*,\lambda}^{1/\lambda}\mathbb E\left[\sum_{t=1}^n I_t(x^*; (A_t, Y_t))^{1/\lambda}\right] \\
     &\leq 2^{1-2/\lambda} \Psi_{*,\lambda}^{1/\lambda}n^{1-1/\lambda}\mathbb E\left[\sum_{t=1}^n I_t(x^*; (A_t, Y_t))\right]^{1/\lambda},
\end{split}
\end{equation}
where the last inequality is from Holder's inequality with $p=\lambda/(\lambda-1)$ and $q=\lambda$.

In the end, we bound the cumulative information gain using the chain rule of mutual information,
\begin{equation*}
\begin{split}
     &\sum_{t=1}^n\mathbb E[I_t(x^*;(A_t, Y_t))]= \sum_{t=1}^n I(x^*;(A_t, Y_t)|\cF_t) =I(x^*;\cF_{n+1}).
\end{split}
\end{equation*}
Combining with Eq.~\eqref{eqn:regret_decom_2}, we have 
\begin{equation*}
        \BR_n(\pi, \rho)\leq 2^{1-2/\lambda} (\Psi_{*,\lambda}I(x^*; \cF_{n+1}))^{1/\lambda}n^{1-1/\lambda}. 
\end{equation*}
This ends the proof.

\subsection{Proof of Lemma \ref{lemma:bound_mutual_information}}\label{sec:proof_mutual_information}
Denote $Z_1 = (A_1, Y_1),\ldots, Z_n = (A_n, Y_n)$ such that $Z^n = (Z_1, \ldots, Z_n)$.
When the number of actions $K$ is small, we could directly bound it by
\begin{equation*}
    I(x^*; Z^n) = H(x^*) - H(x^*|Z^n)\leq H(x^*)\leq \log |\cA| = \log(K),
\end{equation*}
where for the first inequality we use the non-negativity of Shannon entropy. 

When the number of actions is large or infinite, we will bound it through the following information-theoretic argument. Recall that $x^* = \argmin_{a\in\cA} x^{\top}\theta^*$ so $x^*$ can be viewed as a deterministic function $\theta^*$. By the data processing lemma (Lemma \ref{lemma:data_processing}), we have $I(x^*;Z^n) \leq I(\theta^*; Z^n)$. In other words, we bound the information gain regarding the optimal action by the information gain regarding the true parameter.

Recall that we assume the prior distribution of $\theta^*$ is $\rho(\theta^*)$ that takes the value in $\Theta$.
From \cite{vershynin2009role}
, we know $\Theta$ enjoys an $\varepsilon$-net $\cN_{\varepsilon}$ under $\ell_2$-norm and its cardinality at most $(Cd/s\varepsilon)^s$ where $C$ is a constant. Hence, its metric entropy satisfies
\begin{equation}\label{eqn:metric_entropy}
    \log |\cN_{\varepsilon}|\leq s\log(Cd/s\varepsilon).
\end{equation}
 Suppose the Bayes mixture density $p_{\rho}(z^n) = \int_{\theta\in\Theta} p(z^n|\theta)d \rho(\theta)$. According to the definition of mutual information,
\begin{equation}\label{eqn:mutual_information}
\begin{split}
    I(\theta^*; Z^n) &=\mathbb E_{\theta^*}\left[D_{\KL}(\mathbb P_{Z^n|\theta^*}||\mathbb P_{Z^n})\right]\\
    &= \int_{\theta^*\in\Theta}\int p(z^n|\theta^*)\log\Big(\frac{p(z^n|\theta^*)}{p_w(z^n)}\Big)\mu(dz^n)d\rho(\theta^*)\\
    &\leq  \int_{\theta\in\Theta}\int p(z^n|\theta^*)\log\Big(\frac{p(z^n|\theta^*)}{q(z^n)}\Big)\mu(dz^n)d\rho(\theta^*)\\
    & = \int_{\theta\in\Theta} D_{\KL}(\mathbb P_{Z^n|\theta^*}||\mathbb Q_{Z^n})d\rho(\theta^*).
    \end{split}
\end{equation}
where the inequality is due to the fact that Bayes mixture density $p_{\rho}(z^n)$ minimizes the average KL divergences over any choice of densities $q(z^n)$. Then we choose ${\rho}_1$ as an uniform distribution over $\cN_{\varepsilon}$ such that $q(z^n) = p_{\rho_1}(z^n) = \int_{\theta\in\Theta} p(z^n|\theta)d \rho_1(\theta)$ and we denote $\mathbb Q_{Z^n}$ as the corresponding probability measure. Since $\cN_{\varepsilon}$ is an $\varepsilon$-net over $\Theta$ under $\ell_2$-norm, for each $\theta\in\Theta$, there exists $\tilde{\theta}\in\Theta$ such that $\|\theta-\tilde{\theta}\|_2\leq \varepsilon$.

To bound the KL-divergence term, we follow 
\begin{equation}\label{eqn:KL_calculation}
\begin{split}
    D_{\KL}(\mathbb P_{Z^n|\theta}||\mathbb Q_{Z^n}) &= \mathbb E\left[\log \frac{p(z^n|\theta^*)}{(1/|\cN_{\varepsilon}|)\sum_{\tilde{\theta}\in\cN_{\varepsilon}}p(z^n|\tilde{\theta})}\right]\\
    &\leq \mathbb E\left[\log \frac{p(z^n|\theta^*)}{(1/|\cN_{\varepsilon}|)p(z^n|\tilde{\theta})}\right]\\
    &\leq \log |\cN_{\varepsilon}| + D_{\KL}(\mathbb P_{Z^n|\theta}|| \mathbb P_{Z^n|\tilde{\theta}}).
\end{split}
\end{equation}
By the chain rule of KL-divergence,
\begin{equation*}
   D_{\KL}(P_{Z^n|\theta}||\mathbb P_{Z^n|\tilde{\theta}})\leq \mathbb E\left[ \sum_{t = 1}^n D_{\KL}(\mathbb P_{Y_t|A_t, Z^{t-1},\theta^*}||\mathbb P_{Y_t|A_t, Z^{t-1},\tilde{\theta}})\right],
\end{equation*}
where we define $Z^0 = \emptyset$. Under linear model Eq.~\eqref{def:sparse_linear} and bandit $\theta$, we know $Y_t\sim N(A_t^{\top}\theta, 1)$. A straightforward computation leads to 
\begin{equation}\label{eqn:KL_linear}
\begin{split}
     D_{\KL}(\mathbb P_{Y_t|A_t, Z^{t-1},\theta^*}||\mathbb P_{Y_t|A_t, Z^{t-1},\tilde{\theta}}) &= \frac{1}{2\sigma^2}\|A_t^{\top}\theta^* - A_t^{\top}\tilde{\theta}\|_2^2\\
     &\leq \frac{1}{2\sigma^2}\|A_t\|_{\infty}^2\|\theta^* - \tilde{\theta}\|_1^2\\
     &\leq \frac{1}{2\sigma^2}s\|\theta^* - \tilde{\theta}\|_2^2\\
     &\leq \frac{s}{2\sigma^2}\varepsilon^2,
\end{split}
\end{equation}
where the first inequality we use the fact that $\|a\|_{\infty}\leq 1$ and the parameters are sparse. Here actually we only require $\|a\|_{\infty}$ for $a\in\cA$ being bounded by a constant since evetually it will only appears inside the logarithm term. Putting Eqs. \eqref{eqn:metric_entropy}-\eqref{eqn:KL_linear} together, we have 
\begin{equation*}
    I(\theta^*;Z^n)\leq \int_{\theta^*\in \Theta} \Big(s\log(Cd/s\varepsilon)+ \frac{ns}{2\sigma^2}\varepsilon^2\Big) d\theta^* = s\log(Cd/s\varepsilon)+ \frac{ns}{2\sigma^2}\varepsilon^2.
\end{equation*}
With the choice of $\varepsilon = 1/\sqrt{n}$, we finally have 
\begin{equation*}
     I(\theta^*;Z^n)\leq 2s\log(Cdn^{1/2}/s).
\end{equation*}
This ends the proof.

\subsection{Proof of Lemma \ref{lemma:information_ratio}}
For any particular policy $\tilde{\pi}$, if one can derive an worse-case bound of $\Psi_{t,\lambda}(\tilde{\pi})$, we get an upper bound for $\Psi_{*,\lambda}$ automatically. The remaining step is to choose proper policy $\tilde{\pi}$.

First, we bound the information ratio with $\lambda=2$ that essentially follows Proposition 5 in \cite{russo2014learning} and Lemma 3 in \cite{russo2014learning} for a Gaussian noise. By the definition of mutual information, for any $a\in\cA$, we have 
\begin{equation}\label{eqn:IG_pinsker}
    \begin{split}
      I_t(x^*; Y_{t,a})
      &= D_{\KL}\left(\mathbb P_t((x^*, Y_{t,a})) ||\mathbb P_t(x^*\in\cdot)\mathbb P_t\big(Y_{t,a}\in\big)\right)\\
      &=\sum_{a^*\in\cA}\mathbb P_t(x^* = a^*)D_{\KL}\left(\mathbb P_t(Y_{t,a}=\cdot|x^*=a^*)||\mathbb P_t(Y_{t,a} = \cdot)\right).
    \end{split}
\end{equation}
Define $R_{\max}$ as the upper bound of maximum expected reward. It is easy to see $Y_{t,a}$ is a $\sqrt{R_{\max}^2+1}$ sub-Gaussian random variable. According to Lemma 3 in \cite{russo2014learning}, we have 
\begin{equation}\label{eqn:bound_information_gain}
    I_t(x^*;Y_{t,a})\geq \frac{2}{R_{\max}^2+1}\sum_{a^*\in\cA}\mathbb P_t(x^* = a^*)\Big(\mathbb E_t[Y_{t,a}|x^* = a^*]-\mathbb E_t[Y_{t,a}]\Big)^2.
\end{equation}
We bound the information ratio of IDS by the information ratio of TS:
\begin{equation*}
    \Psi_{*,2}\leq \max_{t\in[n]}\frac{\langle \pi_t^{\text{TS}}, \Delta_t\rangle^2}{\langle \pi_t^{\text{TS}}, I_t \rangle}.
\end{equation*}
Using the matrix trace rank trick described in Proposition 5 in \cite{russo2014learning}, we have $ \Psi_{*,2}\leq (R_{\max}^2+1)d/2$ in the end.

Second, we bound the information ratio with $\lambda=3$. Recall that the exploratory policy $\mu$ is defined as 
\begin{equation*}
      \max_{\mu\in\cD(\cA)}  \ \sigma_{\min}\Big(\int_{x\in\cA}xx^{\top} d \mu(x)\Big)\,.
\end{equation*}
Consider a mixture policy $\pi_t^{\mix} = (1-\gamma)\pi_t^{\text{TS}}+\gamma \mu$ where the mixture rate $\gamma\geq 0$ will be decided later. Then we will bound the following in two steps.
\begin{equation*}
    \Psi_{t,3}(\pi_t^{\mix}) = \frac{\langle \pi_t^{\mix}, \Delta_t \rangle^3}{\langle \pi_t^{\mix}, I_t\rangle}.
\end{equation*}

\paragraph{Step 1: Bound the information gain}
According the lower bound of information gain in Eq.~\eqref{eqn:bound_information_gain},
\begin{equation*}
\begin{split}
     \langle \pi_t^{\mix}, I_t\rangle&\geq \frac{2}{(R_{\max}^2+1)}\sum_{a\in\cA}\pi_t^{\mix}(a)\sum_{a^*\in\cA}\mathbb P_t(x^*=a^*)\left(\mathbb E_t[Y_{t,a}|x^*=a^*]-\mathbb E_t[Y_{t,a}]\right)^2\\
     & = \frac{2}{(R_{\max}^2+1)}\sum_{a\in\cA}\pi_t^{\mix}(a)\sum_{a^*\in\cA}\mathbb P_t(x^*=a^*)\left(a^{\top}\mathbb E_t[\theta^*|x^*=a^*]-a^{\top}\mathbb E_t[\theta^*]\right)^2.
\end{split}
\end{equation*}
By the definition of the mixture policy, we know that $\pi_t(a)\geq \gamma \mu(a)$ for any $a\in\cA$. Then we have 
\begin{equation*}
\begin{split}
     \langle \pi_t^{\mix}, I_t\rangle\geq& \frac{2}{(R_{\max}^2+1)}\gamma\sum_{a^*\in\cA}\mathbb P_t(x^*=a^*)\\
     &\cdot\sum_{a\in\cA}\mu(a)(\mathbb E_t[\theta^*|x^*=a^*]-\mathbb E_t[\theta^*])^{\top}aa^{\top}(\mathbb E_t[\theta^*|x^*=a^*]-\mathbb E_t[\theta^*]).
\end{split}
\end{equation*}
From the definition of minimum eigenvalue, we have
\begin{equation*}
     \langle \pi_t^{\mix}, I_t\rangle\geq \frac{2\gamma}{(R_{\max}^2+1)}\sum_{a\in\cA}\mathbb P_t(x^*=a)C_{\min}\left\|\mathbb E_t[\theta^*|x^*=a^*]-\mathbb E_t[\theta^*]\right\|_2^2.
\end{equation*}

\paragraph{Step 2: Bound the instant regret} We decompose the regret by the contribution from the exploratory policy and the one from TS:
\begin{equation}\label{eqn:bound1}
\begin{split}
      &\langle \pi_t^{\mix}, \Delta_t\rangle \\
      &= \sum_{a}\mathbb E_t\Big[\langle x^*, \theta^*\rangle-\langle a, \theta^* \rangle\Big]\pi_t^{\mix}(a),\\
      & = (1-\gamma)\sum_a \pi_t^{\text{TS}}(a)\mathbb E_t\Big[\langle x^*, \theta^*\rangle-\langle a, \theta^* \rangle\Big]+ \gamma \sum_a\mathbb E_t\Big[\langle x^*, \theta^* \rangle -\langle a,\theta^*\rangle\Big]\mu(a)\\
      &= (1-\gamma)\sum_a \mathbb P_t(x^*=a)\mathbb E_t\Big[\langle x^*, \theta^*\rangle-\langle a, \theta^* \rangle\Big]+ \gamma \sum_a\mathbb E_t\Big[\langle x^*, \theta^* \rangle -\langle a,\theta^*\rangle\Big]\mu(a)
\end{split}
\end{equation}
Since $R_{\max}$ is the upper bound of maximum expected reward, the second term can be bounded $2R_{\max}\gamma$. Next we bound the first term as follows:
\begin{equation*}
    \begin{split}
        &\sum_a \mathbb P_t(x^*=a)\mathbb E_t\Big[\langle x^*, \theta^*\rangle-\langle a, \theta^* \rangle\Big] \\
        &= \sum_a\mathbb P_t(x^*=a)\Big(\mathbb E_t[\langle a, \theta^*\rangle|x^*=a]-\mathbb E_t[\langle a,\theta^*\rangle]\Big)\\
        &=\sum_a\mathbb P_t^{1/2}(x^*=a)\mathbb P_t^{1/2}(x^*=a)\Big(\mathbb E_t[\langle a, \theta^*\rangle|x^*=a]-\mathbb E_t[\langle a,\theta^*\rangle]\Big)\\
        &\leq \sqrt{\sum_a \mathbb P_t(x^*=a)\Big(\mathbb E_t[\langle a, \theta^*\rangle|x^*=a]-\mathbb E_t[\langle a,\theta^*\rangle]\Big)^2},
    \end{split}
\end{equation*}
where we use Cathy-Schwarz inequality. Since all the optimal actions are sparse, any action $a$ with $\mathbb P_t(x^*=a)>0$ must be sparse. Then we have 
\begin{equation*}
    \left(a^{\top}(\mathbb E_t[\theta^*|x^*=a]-\mathbb E_t[\theta^*])\right)^2\leq s^2 \left\|\mathbb E_t[\theta^*|x^*=a^*]-\mathbb E_t[\theta^*]\right\|_2^2,
\end{equation*}
for any action $a$ with $\mathbb P_t(x^*=a)>0$. This further implies
\begin{equation}\label{eqn:bound2}
\begin{split}
     &\sum_a \mathbb P_t(x^*=a)\mathbb E_t\Big[\langle x^*, \theta^*\rangle-\langle a, \theta^* \rangle\Big]\\
     &\leq \sqrt{\sum_a \mathbb P_t(x^*=a)s^2\left\|\mathbb E_t[\theta^*|x^*=a^*]-\mathbb E_t[\theta^*]\right\|_2^2}\\
     &=\sqrt{\frac{s^2(R_{\max}^2+1)}{2\gamma C_{\min}}\frac{2\gamma}{(R_{\max}^2+1)}\sum_a \mathbb P_t(x^*=a)C_{\min}\left\|\mathbb E_t[\theta^*|x^*=a^*]-\mathbb E_t[\theta^*]\right\|_2^2}\\
     &\leq \sqrt{\frac{s^2(R_{\max}^2+1)}{2\gamma C_{\min}}\langle \pi_t^{\mix}, I_t \rangle}.
    \end{split}
\end{equation}
Putting Eq.~\eqref{eqn:bound1} and \eqref{eqn:bound2} together, we have
\begin{equation*}
    \langle \pi_t^{\mix}, \Delta_t \rangle\leq \sqrt{\frac{s^2(R_{\max}^2+1)}{2\gamma C_{\min}}\langle \pi_t^{\mix}, I_t\rangle} + 2R_{\max}\gamma.
\end{equation*}
By optimizing the mixture rate $\gamma$, we have 
\begin{equation*}
    \frac{\langle \pi_t^{\mix}, \Delta_t\rangle^3}{\langle \pi_t^{\mix}, I_t\rangle}\leq \frac{s^2(R_{\max}^2+1)}{8R^2_{\max}C_{\min}}\leq \frac{s^2}{4C_{\min}}.
\end{equation*}
This ends the proof.

\section{Detailed algorithms}\label{sec:detailed_algorithm}
For each $a\in\cA$, we expand $v_t(a)$ as follows:
\begin{equation*}
\begin{split}
     v_t(a) &= \text{Var}_t(\mathbb E_t[a^{\top}\theta|x^*]) =\mathbb E_t\Big[a^{\top}\mathbb E_t[\theta|x^*]-\mathbb E_t\big[a^{\top}\mathbb E_t[\theta|x^*]\big]\Big]^2\\
     &=\mathbb E_t\Big[a^{\top}\mathbb E_t[\theta|x^*]-a^{\top}\mathbb E_t[\theta]\Big]^2 = a^{\top}\mathbb E_t[(\mathbb E_t[\theta|x^*]-\mathbb E_t[\theta])(\mathbb E_t[\theta|x^*]-\mathbb E_t[\theta])^{\top}]a.
\end{split}
\end{equation*}
We denote $\mu_t = \mathbb E_t[\theta]$ as the posterior mean and $\mu_t^a = \mathbb E_t[\theta|x^*=a]$. We let $\Phi\in\mathbb R^{|\cA|\times d}$ as the feature matrix where each row of $\Phi$ represent each action in $\cA$. We summarize the procedure of estimating $\Delta_t, I_t$ in Algorithm \ref{alg:est_inf_ratio}.
{\small
\begin{algorithm}[htb!]
	\caption{Approximate $\Delta_t$, $v_t$ based on posterior samples}
	\begin{algorithmic}[1]\label{alg:est_inf_ratio}
		\STATE
		\textbf{Input:} $M$ posterior samples $\theta^1, \ldots, \theta^M$ from Eq.~\eqref{eqn:full_posterior}, action set $\cA$.
		\STATE 
		Calculate $\hat{\mu}_t = \sum_m \theta^m/M$.
\FOR{$a\in\cA$}		
		\STATE Find $\hat{\Theta}_a = \{m\in[M]:(\Phi\theta^m)_a = \max_{a'\in\cA}(\Phi\theta^m)_{a'}\}$.
		\STATE Calculate $\hat{p}^*_a = |\hat{\Theta}_a|/M$.
		\STATE Calculate $\hat{\mu}_t^a = \sum_{m\in\hat{\Theta}_a}\theta^m/|\hat{\Theta}_a|$. 
		\STATE Calculate
		\begin{equation*}
		    \hat{v}_t(a) = a^{\top}\sum_{a}\hat{p}^*_a(\hat{\mu}_t^a-\hat{\mu}_t)(\hat{\mu}_t^a-\hat{\mu}_t)^{\top}a, \hat{\Delta}_t(a) = \sum_{a\in\cA}\hat{p}^*_aa^{\top}\hat{\mu}^a_t - a^{\top}\hat{\mu}_t.
		    		\end{equation*}
		    \ENDFOR
		    \STATE \textbf{Output:} $\hat{v}_t, \hat{\Delta}_t$.
	\end{algorithmic}
\end{algorithm}
}

\section{Supporting lemmas}
\begin{lemma}[First-order optimality condition.]\label{lemma:first-order}
Suppose that $f_0$ in a convex optimization problem is differentiable. Let $\cX$ denote the feasible set. Then $x$ is optimal if and only if $x\in\cX$ and $\nabla f_0(x)^{\top}(y-x)\geq 0, \forall y\in\cX.$
\end{lemma}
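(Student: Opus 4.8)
The statement is the classical \emph{first-order optimality condition} for convex programs, and the plan is to prove the two implications separately: sufficiency from the first-order convexity inequality, and necessity from a directional-derivative argument. Throughout I treat the problem as $\min_{x\in\cX} f_0(x)$ with $f_0$ convex and differentiable and $\cX$ convex.

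For the sufficiency direction, I would start from the hypothesis that $x\in\cX$ satisfies $\nabla f_0(x)^\top(y-x)\geq 0$ for all $y\in\cX$, and invoke the first-order characterization of convexity, $f_0(y)\geq f_0(x)+\nabla f_0(x)^\top(y-x)$. Combining the two immediately yields $f_0(y)\geq f_0(x)$ for every feasible $y$, so $x$ is optimal; this direction is a one-line consequence of convexity.

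For the necessity direction, I would argue by contradiction: assuming $x$ is optimal but some $y\in\cX$ has $\nabla f_0(x)^\top(y-x)<0$, I would use convexity of $\cX$ to keep the segment $z(t)=x+t(y-x)$ feasible for $t\in[0,1]$, set $g(t)=f_0(z(t))$, and compute $g'(0)=\nabla f_0(x)^\top(y-x)<0$ by the chain rule. A strictly negative derivative at $t=0$ forces $f_0(z(t))<f_0(x)$ for all sufficiently small $t>0$ with $z(t)$ feasible, contradicting optimality and hence establishing the inequality for every $y\in\cX$.

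The argument is entirely standard and I expect no real obstacle; the only two places that genuinely need care are invoking \emph{convexity of the feasible set} so that the segments $z(t)$ stay feasible in the necessity direction, and invoking \emph{convexity of $f_0$} (rather than mere differentiability) to obtain the global lower bound in the sufficiency direction, with differentiability supplying the descent direction through $g'(0)$.
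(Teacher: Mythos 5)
Your proof is correct. Note that the paper itself states this lemma without proof in its ``Supporting lemmas'' appendix: it is the classical first-order optimality condition for convex programs (the textbook argument of Boyd and Vandenberghe), invoked only as a tool in the proof of the generic regret bound. Your two-direction argument is exactly that standard proof: sufficiency follows in one line from the first-order convexity inequality $f_0(y)\geq f_0(x)+\nabla f_0(x)^{\top}(y-x)$, and necessity follows from the directional-derivative contradiction along the feasible segment $z(t)=x+t(y-x)$, where convexity of the feasible set $\cX$ (implicit in the phrase ``convex optimization problem'') guarantees $z(t)\in\cX$ for $t\in[0,1]$. You also correctly isolate which hypothesis each direction uses --- convexity of $f_0$ only for sufficiency, differentiability plus convexity of $\cX$ for necessity --- so your write-up supplies precisely what the paper leaves implicit, with no gaps.
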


\begin{lemma}[Data processing lemma]\label{lemma:data_processing}
If $Z=g(Y)$ for a deterministic function $g$, then $I(X;Y)\geq I(X;Z)$.
\end{lemma}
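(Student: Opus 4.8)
The plan is to establish the inequality through the chain rule for mutual information together with the nonnegativity of conditional mutual information, exploiting that the deterministic relationship $Z = g(Y)$ annihilates one of the two cross terms. The central observation is that $X \to Y \to Z$ forms a Markov chain: since $Z$ is a deterministic function of $Y$, conditioning on $Y$ renders $Z$ independent of $X$.

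First I would expand the joint mutual information $I(X;(Y,Z))$ in the two possible orders via the chain rule, giving
\begin{equation*}
I(X;(Y,Z)) = I(X;Y) + I(X;Z \mid Y) = I(X;Z) + I(X;Y \mid Z)\,.
\end{equation*}
Next I would argue that $I(X;Z\mid Y) = 0$. Reading this quantity through the KL-divergence definition used throughout the paper, it is the expected relative entropy between the conditional law of $Z$ given $(X,Y)$ and given $Y$. Because $Z = g(Y)$, both of these conditional laws are the point mass at $g(Y)$, so the relative entropy vanishes pointwise and hence $I(X;Z\mid Y)=0$. Equivalently, $\mathbb{P}_{(X,Z)\mid Y} = \mathbb{P}_{X\mid Y}\,\mathbb{P}_{Z\mid Y}$ since $\mathbb{P}_{Z\mid Y}$ is degenerate, which is precisely the conditional independence of $X$ and $Z$ given $Y$.

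Finally, I would invoke the nonnegativity of conditional mutual information, $I(X;Y\mid Z)\geq 0$, which itself follows from the nonnegativity of relative entropy (Jensen's inequality applied to the convex function $-\log$). Substituting $I(X;Z\mid Y)=0$ into the first expansion and then dropping the nonnegative term $I(X;Y\mid Z)$ from the second yields
\begin{equation*}
I(X;Y) = I(X;Z) + I(X;Y\mid Z) \geq I(X;Z)\,,
\end{equation*}
which is the claim.

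The main obstacle — indeed essentially the only nontrivial point — is making the vanishing of $I(X;Z\mid Y)$ fully rigorous when the variables are continuous, where one cannot freely write $I(X;Z\mid Y) = H(Z\mid Y) - H(Z\mid X,Y)$ because the entropies may be infinite and must instead work directly from the relative-entropy definition. The degeneracy of the conditional law $\mathbb{P}_{Z\mid Y}$ resolves this cleanly, but it is the step where measure-theoretic care is warranted; everything else is a routine application of the chain rule and nonnegativity of $D_{\KL}$.
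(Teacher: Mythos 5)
Your proof is correct. The paper states Lemma~\ref{lemma:data_processing} without proof, treating it as a standard fact, and your argument --- the two chain-rule expansions of $I(X;(Y,Z))$, the vanishing of $I(X;Z\mid Y)$ because the conditional law of $Z$ given $Y$ is the point mass at $g(Y)$, and the nonnegativity of $I(X;Y\mid Z)$ --- is precisely the canonical proof of the data-processing inequality, with your caveat about working from the relative-entropy definition rather than entropy differences in the continuous case being well placed.
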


\end{document}